\def\BibTeX{{\rm B\kern-.05em{\sc i\kern-.025em b}\kern-.08em
		T\kern-.1667em\lower.7ex\hbox{E}\kern-.125emX}}
\newcommand{\matrixstyle}[1]{\mathrm{#1}}
\newcommand{\vectorstyle}[1]{\boldsymbol{\mathbf{#1}}}
\newcommand{\expect}[1]{\mathbb{E}_{#1}}
\newcommand{\kullbacks}[2]{\mathbb{D}\left[#1\parallel#2\right]}
\newtheorem{theorem}{\textbf{Theorem}}
\newtheorem{lemma}{\textbf{Lemma}}
\newtheorem{corollary}{\textbf{Corollary}}
\begin{document}
	
	\title{Entropy Regularised Deterministic Optimal Control: \\ From Path Integral Solution to Sample-Based Trajectory Optimisation}
	\author{
		Tom Lefebvre, Guillaume Crevecoeur 
		\vspace*{-18pt}
		\thanks{\{Tom.Lefebvre,Guillaume.Crevecoeur\}@ugent.be are with Ghent University, 9000 Ghent, Belgium; and with EEDT-DC, Flanders Make, Belgium.
		}
		\thanks{Manuscript received Month XX, 20XX; revised Month XX, 20XX;
			accepted Month XX, 20XX. Date of publication Month XX, 20XX;
			date of current version  Month XX, 20XX. Recommended by Senior
			Editor YY}
	}
	
	\maketitle
	
	\begin{abstract} 
		Sample-based trajectory optimisers are a promising tool for the control of robotics with non-differentiable dynamics and cost functions. Contemporary approaches derive from a restricted subclass of stochastic optimal control where the optimal policy can be expressed in terms of an expectation over stochastic paths. By estimating the expectation with Monte Carlo sampling and reinterpreting the process as exploration noise, a stochastic search algorithm is obtained tailored to (deterministic) trajectory optimisation. For the purpose of future algorithmic development, it is essential to properly understand the underlying theoretical foundations that allow for a principled derivation of such methods. In this paper we make a connection between entropy regularisation in optimisation and deterministic optimal control. We then show that the optimal policy is given by a belief function rather than a deterministic function. The policy belief is governed by a Bayesian-type update where the {likelihood} can be expressed in terms of a conditional expectation over paths induced by a prior policy. Our theoretical investigation firmly roots sample-based trajectory optimisation in the larger family of control as inference. It allows us to justify a number of heuristics that are common in the literature and motivate a number of new improvements that benefit convergence.
	\end{abstract}
	
	\begin{IEEEkeywords}
		\footnotesize Optimal control, Optimisation algorithms
	\end{IEEEkeywords}
	
	\section{Introduction}
	\IEEEPARstart{T}{rajectory} optimisation is ubiquitous in robotics. It is used to synthesise complex dynamic behaviour \cite{mordatch2012trajopt} as well as to compute real-time feedback control \cite{dantec2021icra}. Currently there are two classes of algorithms that address trajectory optimisation: (1) gradient based algorithms deriving from iLQR \cite{todorov2005ilqr} and DDP \cite{mayne1966ddp} that rely on linear-quadratic approximations about a nominal trajectory, and (2) sample-based approaches akin to Model Predictive Path Integral (MPPI) control \cite{williams2016agressive,williams2017model,williams2018information}. As opposed to the first category, the second class of methods relies on sampled trajectories to probe the local optimisation landscape and use approximate inference techniques to update the solution. {These methods ought to be less prone to failure when the models are non-differentiable or many local minima exist \cite{ha2016path}. In addition, they are highly parallelisable \cite{williams2018information}.} For this reason sample-based trajectory optimisation has gained the interest of many researchers in robotics\cite{ha2018path,kahn2021badgr,nagabandi2020deep,bhardwaj2021fast}. 
	
	MPPI derives from a restricted subclass of stochastic optimal control where inference emerges \textit{naturally}; see \cite{kappen2005linear} and sec. \ref{sec:control-as-inference-through-lsoc}. This class is known as Linearly Solvable Optimal Control (LSOC) or path integral control. Essentially, the optimal policy can be expressed as a conditional expectation of an exponential cost-to-go. Because the expectation is taken over the passive stochastic dynamics, it is possible to compute the control from passive sampled trajectories. 
	
	{A body of work has proposed adaptations to MPPI by pursuing similarities with well-known gradient-based and stochastic optimisation algorithms\cite{stulp2012path,bhardwaj2021fast,rajamaki2016sampled,ghandi2021ieeeral,lefebvre2019path}. These adjustments outperform the original algorithm but lack justification.}	
	The main objective of this article is to root the derivation of sample-based trajectory optimisers in a generic theoretical framework. In fact, MPPI belongs to a larger family of ideas collectively known as \textit{control as inference}. The idea is to reformulate the optimal control problem as a probabilistic inference problem so one can draw from the computational machinery that addresses inference. There are a number of ways to do this, and there is still ongoing research to determine how the different approaches are connected to each other \cite{rawlik2013stochastic,Watson2021cai,levine2018reinforcement}. {The main classes are path integral control, entropy regularised reinforcement learning \cite{ziebart2010modeling,rawlik2013stochastic} and message passing \cite{toussaint2009robot,watson2020stochastic}.} {Message passing algorithms reformulate stochastic optimal control as Bayesian input estimation by reinterpreting the likelihood of an observation as the desirability of a state-action pair. This problem can be solved using the Bayesian filtering and smoothing equations}. { As far as we are aware of, no sample-based algorithms are related to this idea.} Entropy regularisation in reinforcement learning addresses stochastic optimal control with additional entropic terms in the control objective. This leads to a so-called soft Bellman recursive equation, which can be solved for Linear Gaussian Quadratic regulators but not for arbitrary non-linear problems. 
	
	Regardless, entropy regularisation turns out to be a fruitful direction for our purpose. Recent research illustrates how the principle of entropic inference can be put forth as a principled motivation for entropy regularisation in deterministic optimisation \cite{lefebvre2020elsoc,luo2019minima}. In this paper we make a direct connection between entropy regularised optimisation and deterministic optimal control. {To this end, we answer two questions
		\begin{enumerate}
			\item Why does entropic regularisation work in stochastic optimisation, and what is the underlying principle?
			\item Why do heuristic adjustments of MPPI, not justified by theory, work better than the original algorithm?
	\end{enumerate}}
	
	{ The first step towards answering these questions was taken in \cite{lefebvre2020elsoc}, where the underlying problem statement was still closely related to LSOC. As a result, the dynamics had to be invertible and the optimisation variable was a state transition distribution rather than a policy. To extract a locally linear feedback policy, the dynamics also needed to be control affine.}
	
	In this paper, we essentially show that we can drop these assumptions. {We specifically demonstrate that entropic deterministic control gives rise to a path integral expression for optimal control and how a method akin to MPPI can be derived from this result.} Therewith, we establish a theoretical foundation that allows the derivation of generic sample-based search algorithms tailored to deterministic trajectory optimisation.
	
	\subsection{Notation} Discrete time is denoted with subscript $n$. The iteration number in algorithms is denoted with subscript $g$. Entities in sample sets are labelled with subscript $j$. We use bold font to denote sequences, e.g. $\vectorstyle{a} = \{a_0,a_1,\dots,a_{N-1}\}$.  We use time subscript $n$ to denote a subsequence starting from time instant $n$, e.g. $\boldsymbol{\tau}_n = \{\tau_n,\tau_{n+1},\dots,\tau_N\}$. A sequence can also be expressed as $\vectorstyle{a} = \{a_n\}_n$.  We rely on context to imply the range. We combine both notations so that e.g. $\{\vectorstyle{\pi}_g\}_g$  refers to iterates of a sequence $\boldsymbol{\pi}_{g} = \{\pi_{g,n}\}_n$. Expression $x\sim p(x)$ implies that the uncertain variable $x$ is distributed according to distribution $p(x)$, $\mathbb{E}_p[f(x)]$ denotes the expected value of $f(x)$. $\mathcal{N}(x|\mu,\Sigma)$ denotes the multivariate Gaussian or Normal distribution with mean $\mu\in\mathbb{R}^n$ and positive definite covariance matrix $\Sigma\in\mathbb{S}^n_+$. $\mathcal{N}(x|\mu,\Sigma)$ is often shortened to $\mathcal{N}(\mu,\Sigma)$.

	\subsection{Problem description}\label{sec:problem-description}We consider finite horizon discrete time deterministic optimal control. Variables $s\in\mathcal{S}\subset\mathbb{R}^{n_s}$ and $a\in\mathcal{A}\subset\mathbb	{R}^{n_a}$ denote states and controls, respectively. We assume dynamics are governed by a non-linear time variant difference equation given by $s_{n+1}= f_n(s_n,a_n)$. State-action couples are denoted as $\tau_n = \{s_n,a_n\}$, except for $\tau_N = s_N$. A trajectory is defined as $\boldsymbol{\tau} = \{\mathbf{s},\mathbf{a}\}$. A feasible trajectory satisfies the dynamic difference equation. Functions $r_n:\mathcal{S}\times\mathcal{A}\rightarrow\mathbb{R}_{\geq 0}$ and $r_N:\mathcal{S}\rightarrow\mathbb{R}_{\geq 0}$ denote the running and terminal costs, respectively. The cost-to-go from $s$ is defined as
		\begin{equation*}
		R_n(\boldsymbol{\tau}_n) =  r_N(s_N)+\sum\nolimits_{n'=n}^{N-1} r_{n'}(s_{n'},a_{n'}) 
		\end{equation*}
	
	We consider the trajectory optimisation problem defined below. The problem solves for open-loop control sequence $\vectorstyle{a}^*$.
		\begin{equation}
		\label{eq:dtSOC}
		\begin{aligned}
		\min_{\mathbf{a}} R_0(\boldsymbol{\tau}_0) 
		\text{ s.t. } s_{n+1}= f_n(s_n,a_n), ~ s_0 = s_0^*
		\end{aligned}
		\end{equation}
	
	The optimal cost-to-go or value function $V_{n}(\cdot)$ is defined as $V_n(s) = \min_{\vectorstyle{a}_n} R_n(\vectorstyle{\tau}_n)$, subject to the dynamics and initial state $s$. Relying on Bellman's principle of optimality, (\ref{eq:dtSOC}) can be cast into a recursive problem. This gives rise to Bellman's backward recursion equation with boundary condition $V_N(s) = r_N(s)$. Sequence $\mathbf{a}=\{a_n(s_n)\}_n$ contains $N-1$ state-dependent closed-loop policy functions $a_n:\mathcal{S}\rightarrow\mathcal{A}$. 
		\begin{equation}
		\label{def:SBE}
		\begin{aligned}
		V_n(s) &= \min_{a} r_n(s,a) + V_{n+1}(f_n(s,a))\\
		a_n(s) &= \arg\min_{a} r_n(s,a) + V_{n+1}(f_n(s,a))
		\end{aligned}
		\end{equation}
	
	\section{Control as inference through LSOC}\label{sec:control-as-inference-through-lsoc}
	
	LSOC refers to an interesting but restrictive subclass of continuous-time stochastic optimal control problems. Here, we give a brief overview of the main ideas \cite{kappen2005linear}. 
	We consider control affine stochastic dynamics where $\text{d}\xi$ denotes a Wiener process, so that $\mathbb{E}[\text{d}\xi\text{d}\xi^\top]= \Sigma(t,s)$
		\begin{equation*}
		\text{d}s = f(t,s)\text{d}t + \matrixstyle{B}(t,s)( a \text{d}t + \text{d}\xi)  
		\end{equation*}
	A state dependent cost-to-go is defined as
		\begin{equation*}
		C(t)  = c_T(s(T)) + \int\nolimits_{t}^{T} c(\tau,s(\tau)) \text{d}\tau 
		\end{equation*}
	We look for a continuous time optimal control $a^*(t\rightarrow T)$. The cost $C(t)$ is appended with an input-dependent cost. Note that the rate is inversely proportional to the noise covariance. $V(t,s)$ defines the continuous-time value function. The expectation is taken over the path probability $P(a|s(t))$ induced by the stochastic dynamics $a(t\rightarrow T)$ and conditioning on $s(t)$.
		\begin{equation}
		\label{eq=stoc}
		V(t,s(t)) = \min_{a(t\rightarrow T)} \mathbb{E}_{P(a|s(t))}\left[C(t) + \int_t^T \tfrac{\lambda}{2} \|a(\tau)\|^2_{ \Sigma^{-1} }\text{d}\tau\right]
		\end{equation}
	
	Defining the \textit{desirability function} $Z(t,s) = - \lambda \log V(t,s)$, it can be shown that the solution of (\ref{eq=stoc}) is governed by a linear partial differential equation. Remarkably, the solution can then be expressed as a path integral according to the Feynman–Kac formula. Note that the expectation is taken over \textit{passive} paths. 
		\begin{equation*}
		Z(t,s(t)) = \mathbb{E}_{P(0|s(t)}\left[\exp\left(-\tfrac{1}{\lambda }C(t)\right)\right]
		\end{equation*}
	Second it can also be shown that the optimal control satisfies
		\begin{equation*}
		a^*(t,s(t)) = \frac{1}{Z(t,s(t))}  \mathbb{E}_{P(0|s(t)}\left[\exp\left(-\tfrac{1}{\lambda }C(t)\right) \text{d}\xi\right]
		\end{equation*}
	
	On account of Girsanov's theorem, the measure of the expectation can be changed to the system dynamics induced by any arbitrary control $a_g(t\rightarrow T)$ \cite{ha2016path}. This basically amounts to importance sampling in continuous time. Here, $\tfrac{\text{d}P_0}{\text{d}P_{a_g}}$ represents the Radon-Nikodym derivative of $P_0$ with respect to $P_{a_g}$.
		\begin{equation*}
		a^*(t,s(t)) \propto  \mathbb{E}_{P(a_g|s(t)}\left[\tfrac{\text{d}P(0)}{\text{d}P(a_g)}\exp\left(-\tfrac{1}{\lambda }C(t)\right) \text{d}\xi\right]
		\end{equation*}
	This summarises the main concepts from LSOC.
	
	From this theory, one can derive a sample-based trajectory optimisation algorithm known as MPPI. Because this (and related) method(s) rely on the calculation of a path integral, they are also referred to as path integral control. The MPPI algorithm is summarised in Alg. \ref{alg:1}. Due to Girsanov, sampled trajectories are obtained about a reference trajectory induced by $\vectorstyle{a}_g$ with simulated control perturbations $\xi_{n,j}$. An updated control $\vectorstyle{a}_{g+1}$ is inferred according to the theory of LSOC.

	\setlength{\textfloatsep}{5pt}
	\begin{algorithm}[t!]
		\algsetup{linenosize=\small}
		\small
		\caption{MPPI}
		\begin{algorithmic}[1]
			\label{algo:MPPI}
			\STATE \textbf{input }$\delta t,\{a_{0,n}\}_n,\Sigma$
			\STATE \textbf{output }$\{a_{g,n}\}_n$
			\FOR {$g=0,1,2,\dots$}
			\FOR {$j=1,2,3,\dots,M$}
			\FOR {$n=0,1,2,\dots,N-1$}
			\STATE  ${\xi}_{n,j}\sim \mathcal{N}\left(0,{\Sigma}\right)$ 
			\STATE  ${s}_{n+1,j} \leftarrow {f}_n\left({s}_{n,j}\right)\delta t + \matrixstyle{B}_n(s_{n,j}) ({a}_{g,n}\delta t + \xi_{n,j}\sqrt{\delta t}) $
			\STATE  $r_{n,j} \leftarrow c_{n}(s_{n,j})\delta t + \tfrac{\lambda}{2}  a_{g,n}^\top \Sigma^{-1} ({a}_{g,n} \delta t + 2\xi_{n,j}\sqrt{\delta t})$
			\ENDFOR
			\STATE  $r_{N,j} \leftarrow c_N(s_{N,j})\delta t$
			\STATE 	$w_{n,j} \leftarrow \exp \left(-\tfrac{1}{\lambda}\sum_n r_{n,j} \right)$ 
			\ENDFOR
			\FOR {$n=0,1,2,\dots,N-1$}
			\STATE $a_{g+1,n} = a_{g,n} + \sum_{j} \frac{w_{n,j}}{\sum_j w_{n,j}} \xi_{n,j}$
			\ENDFOR
			\ENDFOR
		\end{algorithmic}
		\label{alg:1}
	\end{algorithm}
	
	A few other remarks are in place:
	\begin{enumerate}
		\item The algorithm solves \textit{stochastic} optimal control problem (\ref{eq=stoc}). The policy itself is (thus) deterministic and inference is established by the inherent process noise.
		\item The control penalty in (\ref{eq=stoc}) is quadratic, i.e.,  $\tfrac{1}{2}\|a\|_\matrixstyle{R}$, with $\matrixstyle{R} = \lambda \Sigma^{-1}$. {  Hence, when choosing two out of the three parameters $\matrixstyle{R}$, $\Sigma$ and $\lambda$, we fix the problem that we solve.}
		\item In practice, the covariance is often updated comparable to CMA-ES \cite{stulp2012path,bhardwaj2021fast}, $\Sigma_{g+1,n} =\sum_{j} \frac{w_{n,j}}{\sum_j w_{n,j}} \xi_{n,j}\xi_{n,j}^\top$. 
		\item Also, the theoretical running cost in line 8 is replaced by a generalised running cost, $r_{n,j} \leftarrow r_n(s_{n,j},a_{g,n}+\xi_{n,j})$.
	\end{enumerate}
	
	{Although the two adjustments proposed above lead to better performance, they are not supported by any theory. }Furthermore, the combination of remarks 2 and 3 implies that we are solving different problems with every update. By consequence, we argue they are also not properly understood.
	
	\section{Entropic optimisation}\label{sec:entropic-optimisation} 
	{ In this section we answer question 1) mentioned in the introduction. It serves as a stepping stone to question 2). }
	
	Algorithms for numerical problems, such as optimisation, proceed iteratively, with each iteration providing information that improves a running estimate of the correct solution. Probabilistic numerics \cite{oates2019probnum} pursues methods that, in place of such estimates, update beliefs\footnote{The manifestation and interpretation of probability is epistemic and arises from missing information in a computation that is otherwise deterministic.} over the solution space. In brief, in this section, we aim to rephrase the generic problem of optimisation as a problem of inference. By embedding deterministic optimisation into the framework of entropic inference, we provide a theoretical argument for the use of entropy regularisation in optimisation in addition to the vast empirical validation documented in previous work.

	\subsection{Entropic inference}
	In probability theory, inference refers to the rational processing of incomplete information \cite{jaynes2003}. In the present context, we make use of probabilities to encode our uncertainty about an underlying deterministic quantity and refer to them as beliefs. We seek a posterior, $\pi$, which  encodes new information into a prior, $\rho$, which encodes information that we already have. 
	
	In Bayesian inference, new information is contained in  data or experiments. To the contrary, we are interested in information that is contained in an expectation. The principle that allows us to address this type of information is that of minimum relative entropy or discrimination information \cite{kullback1951information,jaynes1986background,jaynes1982rationale}. The principle states that the unique posterior, $\pi$, living in the space of probability distributions, $\mathcal{P}$, constrained by an expectation of the form $\expect{\pi}[g]=\mu$, is the one that is hardest to discriminate from the prior, $\rho$. Equivalently, it is the one that minimises their relative entropy. Mathematically, this gives rise to a variational optimisation problem of the following form
		\begin{align}
		\label{eq:EI}
		\min_{\pi\in\mathcal{P}} ~ \underbrace{\kullbacks{\pi}{\rho}}_{\text{relative entropy}} 	\text{ s.t. } \underbrace{\mu = \expect{\pi}\left[g \right]}_{\text{new information}}  
		\end{align}
	where $\kullbacks{\pi}{\rho} = \expect{\pi}[\log \pi ] - \expect{\pi}[\log\rho]$. The solution is a Boltzmann distribution $\pi \propto e^{-\frac{1}{\lambda}g} \rho$ with $\lambda$ so that $\mu = \expect{\pi}\left[g \right]$.
	
	
	\subsection{Entropic inference for optimisation}
	Mathematical optimisation addresses problems of the following form where $\mathcal{X}\subset \mathbb{R}^{n}$ represents the feasible subset.
		\begin{equation}
		\label{eq:opt}
		x^* = \arg \min_{x\in\mathcal{X}} q(x)
		\end{equation}
	
	Classic numerical optimisation strategies iterate a running estimate of the solution $x^*$, assimilating new information with each iteration. 
	
	Instead of focussing on a single estimate, we wish to model this search with a belief sequence $\vectorstyle{\pi} =\{\pi_g\}_g$. 	The more information that is assimilated, the more certain we get about the solution. To suit the purpose of optimisation, such a sequence should exhibit the following property
		\begin{equation}\label{eq:seq}
		\begin{aligned}
		&\lim_{g\rightarrow\infty}  \expect{\pi_g}[x] = \arg\min_{x\in\mathcal{X}} q = x^*
		\end{aligned}
		\end{equation}
	
	Second, we need an inference procedure that facilitates an update operation based on some form of new information
		\begin{equation*}
		\pi_{g+1} \leftarrow \pi_g
		\end{equation*}
	
	Our approach to arrive at such an inference procedure tailored to optimisation is straightforward. A priori no information is available. We can represent this situation mathematically by encoding our uncertainty about the solution in a prior probability density function, $\rho$\footnote{If no information is available, we can choose $\rho \leftarrow \mathcal{U}_\mathcal{X}$.}. In order to gradually decrease our uncertainty, we then look for a posterior probability density function, $\pi$, that discriminates the least from our initial guess but has an expectation over the objective function that produces a lower estimate than the prior expectation.
		\begin{equation}\label{eq:eo}
		\begin{aligned}
		\min_{\pi\in\mathcal{P}} ~& \kullbacks{\pi}{\rho}  \\
		\text{s.t. } &  \expect{\pi}[q] \leq \expect{\rho}[q] - \Delta, ~\Delta > 0
		\end{aligned}
		\end{equation}
	
	Whether a solution exists depends on the parameter $\Delta$. If it exists, the solution is a Boltzmann distribution similar to (\ref{eq:EI}). It can be shown that if we choose $\lambda > 0$, there also exists some $\Delta(\rho,\lambda) > 0$ \cite{luo2019minima}. For now, this is sufficient.
		\begin{equation}\label{eq:solEOC}
		\pi (x) \propto \rho(x) \cdot e^{-\lambda q(x)}
		\end{equation}
	
	Clearly, these expressions bear close correspondence with the classical Bayesian update, generating a posterior, $\pi$, by multiplying a prior, $\rho$, with an expression encoding a \textit{likelihood}, which is the likelihood of optimality in this case. By applying substitutions $\pi\leftarrow\pi_{g+1}$ and $\rho\leftarrow\pi_g$, we can establish the sequence $\vectorstyle{\pi}$. This sequence has exactly property (\ref{eq:seq}) \cite{luo2019minima}.  
	
	This property basically implies that the sequence converges to the Dirac delta distribution centred at the optimum. Since the next idea is to emulate the behaviour of this distribution numerically, the property also implies that the sample set will become more and more localised. Although this is exactly what we want from a theoretical point of view, this is troublesome from an algorithmic perspective. {Therefore, we desire to encode a second prior into the sequence that stimulates exploration. We introduce the augmented entropic optimisation problem. Here $\mathcal{U}_\mathcal{X}$ represents the uniform on $\mathcal{X}$ and $\alpha$ is a scaling factor that allows us to attribute more importance to either prior.}
		\begin{equation}\label{eq:eo2}
		\begin{aligned}
		\min_{\pi\in\mathcal{P}} ~&\alpha  \kullbacks{\pi}{\pi_g} + (1-\alpha)  \kullbacks{\pi}{\mathcal{U}_\mathcal{X}}, 0 < \alpha < 1   \\
		\text{s.t. } &  \expect{\pi}[q] \leq \expect{\pi_g}[q] - \Delta, ~\Delta >0
		\end{aligned}
		\end{equation}
	
	For some $\Delta(\pi_g,\lambda,\alpha)>0$, the solution is given by\footnote{For notational convenience, we absorb the uniform distribution $\mathcal{U}_\mathcal{X}$ into the proportionality. Note that $\pi_{g+1}$ is therefore zero outside the set $\mathcal{X}$.} \cite{lefebvre2020elsoc}
		\begin{equation}\label{eq:solEOC2}
		\begin{aligned}
		\pi_{g+1} \propto \pi_g^\alpha \cdot \mathcal{U}_\mathcal{X}^{1-\alpha} \cdot e^{-\lambda q}\propto \pi_g^\alpha\cdot e^{-\lambda q}
		\end{aligned}
		\end{equation}
	and it can be verified that 
		\begin{equation}
		\label{eq:sequence}
		\lim_{\alpha\rightarrow 1} \lim_{g\rightarrow\infty} \pi_g \propto \lim_{\alpha\rightarrow 1} e^{-\frac{\lambda}{1-\alpha} q} \propto \delta(x-x^*)
		\end{equation}
	
	By emulating this sequence numerically, we can construct a stochastic search algorithm tailored to the problem (\ref{eq:opt}).
	
	\subsection{Stochastic search methods}\label{sec:stochastic-search-methods}
	
	The idea is to update a parametrised belief and match its empirical distribution features with those of the theoretical distribution $\pi_g$. In practice, we use a parametric density function $\pi_\theta$ with $\theta \in\Theta$ (e.g., $\mathcal{N}(\mu,\Sigma)$) to approximate the entities in $\vectorstyle{\pi}$. We derive the associated parameter sequence, $\vectorstyle{\theta}$, from samples. Therefore, we project the theoretical entity $\pi_{g+1}$ onto the density space generated by $\pi_\Theta$, minimising their relative entropy. The objective is then manipulated into an expectation over $\pi_g$, which we estimate by sampling $\pi_{\theta_g}$.
		\begin{align*}
		\theta_{g+1} &= \arg\min_{\theta\in\Theta}\kullbacks{\pi_{g+1}}{\pi_\theta} \\ 
		&\approx \arg \max_{\theta\in\Theta} \hat{\mathbb{E}}_{\mathcal{D}_g} \left[   e^{-\lambda (q+(1-\alpha) \log \pi_{\theta_g})} \log \pi_\theta  \right] 
		\end{align*}
	where $\mathcal{D}_g = \{x_{g,j}\}_j,x_{g,j} \sim \pi(\theta_g)$. For details, see appx.  \ref{app:stochastic-search-algorithms}.
	
	A basic implementation is presented in Alg. \ref{alg:2}. Clearly, the computation bears close correspondence with Alg. \ref{alg:1}, though it remains unclear what underlying principle is mutual. For further exploration of similar ideas, we refer to \cite{abdolmaleki2015model,abdolmaleki2017deriving}.

	\begin{algorithm}[t!]
		\algsetup{linenosize=\small}
		\small
		\caption{Basic stochastic search method}
		\begin{algorithmic}[1]
			\label{algo:ES}
			\STATE \textbf{input }$\{\mu_0,\Sigma_0\}$
			\STATE \textbf{output }$\{\mu_g,\Sigma_g\}$
			\FOR {$g=0,1,2,\dots$}
			\FOR {$j=1,2,3,\dots,M$}
			\STATE  $x_j \sim \mathcal{N}\left(\mu_g,{\Sigma}_g\right)$ 
			\STATE  $w_{j} \leftarrow \exp(-({\lambda} q(x_j ) + {(1-\alpha)} \log \mathcal{N}(x_j |\mu_g,\Sigma_g)))$
			\ENDFOR
			\STATE $\mu_{g+1} = \sum\nolimits_j \tfrac{w_j}{\sum_j w_j} x_j  $
			\STATE $\Sigma_{g+1} = \sum\nolimits_j \tfrac{w_j}{\sum_j w_j} (x_j - \mu_{g+1}) (x_j - \mu_{g+1})^\top$
			\ENDFOR
		\end{algorithmic}
		\label{alg:2}
	\end{algorithm}
	
	\section{Entropic deterministic optimal control}
	{In this section we answer question 2). We provide an original derivation for a sample based trajectory optimiser that demonstrates similar heuristics as were described in sec. \ref{sec:control-as-inference-through-lsoc}. Contrary to MPPI contrary our derivation follows directly from of the ideas in sec. \ref{sec:entropic-optimisation} applied to the problem in sec. \ref{sec:problem-description}.}
	
	Although entropy regularisation is a well-known concept in reinforcement learning, it has only been applied to stochastic problems that are naturally embedded in a probabilistic framework. As we will show for deterministic optimal control, entropy regularisation gives rise to explicit expressions for the posterior policies in terms of a conditional expectation taken over trajectories induced by prior policies similar to the setting of LSOC. The crucial difference with Entropic Deterministic Optimal Control (EDOC) is that LSOC inherently solves a stochastic optimal control problem whilst EDOC still solves a deterministic optimal control problem. The inference in LSOC is facilitated by the input noise  inherent to the stochastic problem. Contrarily, in EDOC, the inference is put in place intentionally. The major consequence of this difference is that it disentangles the inference from the control.
	
	\subsection{Entropic Bellman equation}
	We consider the following EDOC problem, where $0<\Delta$ and $0<\alpha<1$.
		\begin{equation}\label{eq:edoc}
		\begin{aligned}
		\min_{\vectorstyle{\pi}\in\mathcal{P}} ~&\alpha  \kullbacks{p(\vectorstyle{\pi})}{p(\vectorstyle{\pi}_g)} + (1-\alpha)  \kullbacks{p(\vectorstyle{\pi})}{p(\vectorstyle{\mathcal{U}}_\mathcal{A})}  \\
		\text{s.t. } &  \expect{p(\vectorstyle{\pi})}[R] \leq \expect{p(\vectorstyle{\pi}_g)}[R] - \Delta
		\end{aligned}
		\end{equation}
	
	Starting from problem (\ref{eq:dtSOC}), we reason as follows. The optimisation variables are given by the control sequence $\vectorstyle{a}$. {This problem is no different than problem (\ref{eq:opt}), so we could address it with Alg. \ref{alg:2}}\footnote{ Note that in this case, the weights $w_j$ would not be time dependent.}. However, we would instead like to exploit the dynamic structure of the problem. {Although the solution of (\ref{eq:dtSOC}) is indeed a trajectory, from (\ref{def:SBE}), we know that a sequence of optimal policies, $\mathbf{a}^* = \{a^*_n(s)\}_n$, underpins this trajectory, which we only happen to know evaluated over deterministic dynamics.} Therefore, we introduce the conditional policy beliefs, $\boldsymbol{\pi}_g = \{\pi_{g,n}(a|s)\}_n$. These beliefs express our uncertainty about control $a$ given state $s$ at time $n$ for the $g$-th iteration. Second, we define a trajectory probability density function conditioned on $s_n$ and $\vectorstyle{\pi}_n$.  
		\begin{align*}
		p_n(\vectorstyle{\pi}_n) &= p(\boldsymbol{\tau}_{n}|s_n,\vectorstyle{\pi}_n) \\
		&= \prod\nolimits_{n=n}^{N-1} \delta(s_{n'+1}-f_{n'}(s_{n'},a_{n'})){\pi}_n(a_n|s_n)
		\end{align*}

	This definition allows us to cast (\ref{eq:dtSOC}) in a probabilistic manner. 
		\begin{equation*}
		\label{eq:E1}
		V_n(s_n) = \min_{\vectorstyle{a}_n} \expect{p_n(\vectorstyle{\pi}_n)}[R_n(\boldsymbol{\tau}_n)]
		\end{equation*}
	In the case that the dynamics are deterministic, this probabilistic problem is completely equivalent to the original problem and solves for a Dirac distribution or deterministic policy. In turn, this probabilistic model motivates problem (\ref{eq:edoc}). 
	
	We will address the solution to this problem in the next paragraph. First, we further motivate our problem definition by demonstrating that we would have arrived at the same result if we had regularised the Bellman equation (\ref{def:SBE}) instead. The following lemma certifies the consistency of the proposed regularisation. It also emphasises that we only regularise over the optimisation variables, $\boldsymbol{a}$, but not the dynamics. Finally, it introduces the \textit{entropic Bellman equation}.
	\begin{lemma} The solution of problem (\ref{eq:edoc}) is governed by the following entropic Bellman equation
			\begin{equation}
			\label{def:EBE}
			\begin{aligned}
			V_{g+1,n} &= \min_{\pi\in\mathcal{P}} \expect{\pi}\left[Q_{g+1,n} \right] + \kullbacks{\pi}{\pi_{g,n}^\alpha \cdot \mathcal{U}_\mathcal{A}^{1-\alpha}} \\
			\pi_{g+1,n} &= \arg\min_{\pi \in\mathcal{P}} \expect{\pi}\left[Q_{g+1,n} \right] + \kullbacks{\pi}{\pi_{g,n}^\alpha \cdot  \mathcal{U}_\mathcal{A}^{1-\alpha}}
			\end{aligned}
			\end{equation}
		where $Q_{g+1,n} = \lambda r_n + V_{g+1,n+1}$.
	\end{lemma}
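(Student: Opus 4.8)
The plan is to prove the lemma by backward induction on the time index $n$, peeling off one stage at a time from the trajectory-level problem (\ref{eq:edoc}) so that it collapses into the per-stage entropic Bellman recursion (\ref{def:EBE}). First I would unfold the two relative-entropy terms using the factorised form of $p_n(\vectorstyle{\pi}_n)$ given just above the lemma. Because the Dirac factors $\delta(s_{n'+1}-f_{n'}(s_{n'},a_{n'}))$ are \emph{shared} between $p_n(\vectorstyle{\pi}_n)$ and $p_n(\vectorstyle{\pi}_{g,n})$ (and, after absorbing the uniform into the proportionality as in (\ref{eq:solEOC2}), between $p_n(\vectorstyle{\pi}_n)$ and $p_n(\vectorstyle{\mathcal{U}}_\mathcal{A})$ too), they cancel in every log-ratio; hence $\kullbacks{p_n(\vectorstyle{\pi}_n)}{p_n(\vectorstyle{\pi}_{g,n})} = \expect{p_n(\vectorstyle{\pi}_n)}\big[\sum_{n'=n}^{N-1}\log\tfrac{\pi_{n'}(a_{n'}|s_{n'})}{\pi_{g,n'}(a_{n'}|s_{n'})}\big]$ and similarly for the uniform term. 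This is the step that makes precise the lemma's remark that ``we only regularise over the optimisation variables, $\boldsymbol{a}$, but not the dynamics.'' Combining with the convexity/Lagrangian argument that turns the constraint $\expect{p(\vectorstyle{\pi})}[R]\le\expect{p(\vectorstyle{\pi}_g)}[R]-\Delta$ into a penalty with multiplier $\lambda>0$ (exactly as in the passage from (\ref{eq:eo2}) to (\ref{eq:solEOC2})), I obtain an unconstrained variational problem: minimise $\expect{p_n(\vectorstyle{\pi}_n)}\big[\lambda R_n + \sum_{n'}\log\tfrac{\pi_{n'}}{\pi_{g,n'}^\alpha\mathcal{U}_\mathcal{A}^{1-\alpha}}\big]$ over $\vectorstyle{\pi}_n\in\mathcal{P}$, with the understanding that the existence of a suitable $\Delta(\pi_g,\lambda,\alpha)>0$ is inherited from the cited results.

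Next I would set up the induction. The base case is $n=N$: there is no control, $R_N=r_N(s_N)$, and $V_{g+1,N}=\lambda r_N = Q_{g+1,N}$ trivially, consistent with (\ref{def:EBE}) under the convention $V_{N+1}\equiv 0$ and the boundary condition. For the inductive step, assume the claim holds at $n+1$, i.e.\ that for every $s_{n+1}$ the tail problem has optimal value $V_{g+1,n+1}(s_{n+1})$ and optimal policies $\{\pi_{g+1,n'}\}_{n'\ge n+1}$ satisfying (\ref{def:EBE}). I would then split the stage-$n$ objective: write $\lambda R_n = \lambda r_n(s_n,a_n) + \lambda R_{n+1}(\boldsymbol{\tau}_{n+1})$ and separate the $n'=n$ summand of the entropy sum from the $n'\ge n+1$ summands. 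Using the tower property of expectation over the deterministic transition $s_{n+1}=f_n(s_n,a_n)$, the conditional expectation of everything indexed by $n'\ge n+1$, taken over $p_{n+1}(\vectorstyle{\pi}_{n+1})$, is by the induction hypothesis minimised (over $\vectorstyle{\pi}_{n+1}$) to exactly $V_{g+1,n+1}(f_n(s_n,a_n))$. What remains to be optimised over $\pi_{n}=\pi_{n}(\cdot|s_n)$ is $\expect{\pi_n}\big[\lambda r_n(s_n,a_n) + V_{g+1,n+1}(f_n(s_n,a_n))\big] + \expect{\pi_n}\big[\log\tfrac{\pi_n}{\pi_{g,n}^\alpha\mathcal{U}_\mathcal{A}^{1-\alpha}}\big]$, which is precisely $\expect{\pi}[Q_{g+1,n}] + \kullbacks{\pi}{\pi_{g,n}^\alpha\cdot\mathcal{U}_\mathcal{A}^{1-\alpha}}$ with $Q_{g+1,n}=\lambda r_n + V_{g+1,n+1}$ as claimed. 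Taking the minimiser and the minimum value gives the two lines of (\ref{def:EBE}).

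The main obstacle I anticipate is the interchange of minimisation and expectation in the inductive step — justifying that $\min_{\vectorstyle{\pi}_n}\expect{p_n}[\cdot] = \min_{\pi_n}\expect{\pi_n}\big[\lambda r_n + \min_{\vectorstyle{\pi}_{n+1}}\expect{p_{n+1}}[\cdot] + (\text{stage-}n\text{ entropy})\big]$. This is Bellman's principle of optimality, but here it must be applied to a \emph{policy-valued} optimisation rather than an open-loop one, and the exchange is legitimate precisely because $\pi_{g+1,n+1}(\cdot|s_{n+1})$ is allowed to depend on $s_{n+1}$ — i.e.\ we optimise over state-conditioned beliefs, so there is no coupling between the choice at time $n$ and the (state-wise) optimal choice at time $n+1$ beyond the state that time-$n$ produces. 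I would make this rigorous by noting that for each fixed $s_{n+1}$ the inner problem is decoupled and its value function is well-defined, then appealing to measurable-selection/Fubini to assemble the conditional minimisers into $\vectorstyle{\pi}_{n+1}$; the deterministic dynamics keep this clean (the ``expectation'' over $s_{n+1}$ is a point mass). A secondary, more bookkeeping-level point is the $\alpha/(1-\alpha)$ split and the absorption of $\mathcal{U}_\mathcal{A}$ into the proportionality, which must be handled consistently across all $N-n$ factors so that the per-stage KL reference is $\pi_{g,n}^\alpha\cdot\mathcal{U}_\mathcal{A}^{1-\alpha}$ and not something with a stray global normaliser; this follows the same manipulation already used in (\ref{eq:eo2})--(\ref{eq:solEOC2}) and carries no real difficulty.
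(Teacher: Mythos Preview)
Your proposal is correct and follows essentially the same route as the paper's proof: introduce the multiplier $\lambda>0$ to pass from the constrained problem (\ref{eq:edoc}) to an unconstrained one, observe that the Dirac transition factors cancel in the log-ratios so that only the policy terms survive in the relative entropies, and then apply Bellman's principle of optimality to peel off stage $n$ and identify the tail as $V_{g+1,n+1}$. The paper compresses your backward induction into a single chain of equalities and does not spell out the interchange-of-min-and-expectation or measurable-selection points you flag, but the argument is the same.
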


	\begin{proof} First, by introducing multiplier $\lambda > 0$, we can recast (\ref{eq:edoc}). Note that the dynamics cancel out in the fractions. 
			\begin{equation*}
			\min_{\boldsymbol{\pi}_n\in\mathcal{P}} \expect{p_n(\boldsymbol{\pi}_n)}\left[\lambda R_n + \alpha \log \tfrac{\vectorstyle{\pi}_n}{\vectorstyle{\pi}_{g,n}} + (1-\alpha) \log \tfrac{\vectorstyle{\pi}_n}{\vectorstyle{\mathcal{U}}_{\mathcal{A},n}} \right]
			\end{equation*}
		
		Then, writing out the objective, we retrieve an optimisation problem that adheres  to Bellman's principle of optimality
			\begin{multline*}
			\label{eq:proof}
			\min_{\boldsymbol{\pi}_n\in\mathcal{P}} \expect{p_n(\boldsymbol{\pi}_n)}\Big[\lambda r_N + \dots \\   \sum\nolimits_{n'=n}^{N-1} \left(\lambda r_{n'} + \alpha \log \tfrac{\pi_{n'}}{\pi_{g,n'}} + (1-\alpha) \log \tfrac{\pi_{n'} }{\mathcal{U}_\mathcal{A}} \right) \Big] \\ 
			= \min_{\pi_n \in\mathcal{P}} \expect{\pi_n}\Big[\lambda r_n + \alpha \log \tfrac{\pi_{n}}{\pi_{g,n}} + (1-\alpha) \log \tfrac{\pi_n}{\mathcal{U}_\mathcal{A}} +  \dots \\
			\min_{\vectorstyle{\pi}_{n+1}\in\mathcal{P}} \expect{p_{n+1}(\vectorstyle{\pi}_{n+1})}\Big[\dots \\ 
			\lambda R_{n+1} + \alpha \log \tfrac{\boldsymbol{\pi}_{n+1}}{\boldsymbol{\pi}_{g,n+1}} + (1-\alpha) \tfrac{\boldsymbol{\pi}_{n+1}}{\boldsymbol{\mathcal{U}}_{\mathcal{A},n+1}} \Big]\Big] \\ 
			= \min_{\pi_n\in\mathcal{P}} \expect{\pi_n}\Big[\lambda r_{n} +  \alpha \log \tfrac{\pi_n}{\pi_{g,n}} + (1-\alpha) \log \tfrac{\pi_n}{\mathcal{U}_\mathcal{A}}  + V_{g+1,n+1} \Big] 
			\end{multline*}	
		where we have defined 
			\begin{equation*}
			V_{g+1,n+1} = \min_{\vectorstyle{\pi}_{n+1}\in\mathcal{P}} \expect{p_{n+1}(\vectorstyle{\pi}_{n+1})}\left[ 
			\lambda R_{n+1} + \log \tfrac{\boldsymbol{\pi}_{n+1}}{\boldsymbol{\pi}_{g,n+1}^\alpha \boldsymbol{\mathcal{U}}_{\mathcal{A},n+1}^{1-\alpha}}\right]
			\end{equation*}
		The lemma follows.
	\end{proof}
	
	Henceforth, we will treat $\lambda > 0$ as a hyper-parameter.
	
	\subsection{Path integral solution} 
	In this section, we address the tractability of the EDOC problems in (\ref{eq:edoc}) and (\ref{def:EBE}). First, we establish a recurrence relation for the optimal policy belief sequence similar to that done for problem (\ref{eq:eo2}) with equation (\ref{eq:solEOC2}). Second, we illustrate that this solution gives rise to an explicit path integral expression for the optimal posterior policy belief. 
	
	The first result is summarised by the following theorem. For the proof, we refer to appendix \ref{sec:proof-of-theorem-reflemdet2}. This result is known as the soft Bellman equation and has been studied in combination with stochastic system dynamics, e.g., in \cite{rawlik2013stochastic} for $\alpha = 0$ and in \cite{levine2018reinforcement} for $\alpha = 1$. The second, which is novel, is summarised in the corollary beneath it.
	
	\begin{theorem}\label{lem:det2}
		The solution of problem (\ref{def:EBE}) is given by 
			\begin{equation*}
			\begin{aligned}
			\pi_{g+1,n} &\propto \pi_{g,n}^{\alpha} \exp\left(- Q_{g+1,n}+V_{g+1,n} \right) \\
			V_{g+1,n} &= -\log \int \pi_{g,n}^{\alpha}\exp\left(- Q_{g+1,n}\right) \text{d}a 
			\end{aligned}
			\end{equation*}
	\end{theorem}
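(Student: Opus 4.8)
The plan is to solve the single–step variational problem that appears inside the entropic Bellman equation (\ref{def:EBE}) in closed form, and then to thread that solution through the backward recursion in $n$. Fix $g$ and $n$ and a state argument $s_n$; abbreviate $Q = Q_{g+1,n}$ and write $\rho = \pi_{g,n}^{\alpha}\,\mathcal{U}_{\mathcal{A}}^{1-\alpha}$ for the (in general unnormalised) reference measure, which is supported on $\mathcal{A}$. The object to minimise over $\pi \in \mathcal{P}$ is $\expect{\pi}[Q] + \kullbacks{\pi}{\rho} = \int_{\mathcal{A}} \pi\,\bigl(Q + \log\pi - \log\rho\bigr)\,\mathrm{d}a$. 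The key algebraic step is a completion of the relative entropy: with $Z = \int_{\mathcal{A}} \rho\, e^{-Q}\,\mathrm{d}a$ and $\pi^{\star} = \rho\, e^{-Q}/Z$, one verifies $\expect{\pi}[Q] + \kullbacks{\pi}{\rho} = \kullbacks{\pi}{\pi^{\star}} - \log Z$ for every $\pi\in\mathcal{P}$. Since relative entropy is non-negative and vanishes exactly when its arguments coincide, the unique minimiser is $\pi_{g+1,n} = \pi^{\star}$ and the optimal value is $-\log Z$. (Equivalently one can run a Lagrangian argument: adjoin a multiplier $\nu$ for $\int \pi\,\mathrm{d}a = 1$, set the pointwise stationarity condition $Q + \log\pi + 1 - \log\rho + \nu = 0$, solve for $\pi$, and fix $\nu$ by normalisation; strict convexity of $\pi\mapsto\pi\log\pi$ makes the stationary point the global minimiser.)

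Next I would read off the two claimed identities. Define $V_{g+1,n} := -\log Z = -\log\int \pi_{g,n}^{\alpha}\,\mathcal{U}_{\mathcal{A}}^{1-\alpha}\, e^{-Q_{g+1,n}}\,\mathrm{d}a$ and absorb the factor $\mathcal{U}_{\mathcal{A}}^{1-\alpha}$ into the proportionality constant exactly as in (\ref{eq:solEOC2}), with the same convention that $\pi_{g+1,n}$ is then zero outside $\mathcal{A}$. This gives $\pi_{g+1,n} \propto \pi_{g,n}^{\alpha}\, e^{-Q_{g+1,n}}$, equivalently $\pi_{g+1,n} = \pi_{g,n}^{\alpha}\exp\bigl(-Q_{g+1,n} + V_{g+1,n}\bigr)$, which is the stated form; the normalisation of $\pi_{g+1,n}$ is precisely the definition of $V_{g+1,n}$, and the identity above shows that this $V_{g+1,n}$ is indeed the optimal value in (\ref{def:EBE}).

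To obtain the full belief sequence it then remains to iterate this per-step result through the backward recursion in (\ref{def:EBE}), with terminal condition $V_{g+1,N} = \lambda r_N$ and $Q_{g+1,n} = \lambda r_n + V_{g+1,n+1}$ at each stage. Each stage is an independent instance of the problem just solved — the minimisation in (\ref{def:EBE}) is over the conditional $\pi(\cdot\mid s_n)$ for each fixed $s_n$, and all quantities are functions of $s_n$ — so no further work is needed beyond bookkeeping of the state argument.

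I do not expect a structural obstacle; the only delicate points are analytic. First, $V_{g+1,n}$ is well defined only when $Z = \int \pi_{g,n}^{\alpha} e^{-Q_{g+1,n}}\,\mathrm{d}a$ is finite and strictly positive, which requires a mild regularity assumption (for instance $\mathcal{A}$ bounded, or sufficient growth of $r_n$, together with $V_{g+1,n+1}$ being finite so the induction propagates). Second, one should check that $\pi^{\star}$ is absolutely continuous with respect to $\rho$ and integrable so that it genuinely lies in $\mathcal{P}$ with $\kullbacks{\pi^{\star}}{\rho}$ finite, and that the completion-of-entropy identity is legitimate on the support of $\pi$ — this is exactly where restricting the support to $\mathcal{A}$ via the $\mathcal{U}_{\mathcal{A}}^{1-\alpha}$ factor does its work. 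These are the same caveats already noted after (\ref{eq:solEOC}) and (\ref{eq:solEOC2}), so the theorem inherits them without introducing any new difficulty.
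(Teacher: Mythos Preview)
Your proposal is correct. The paper's own proof takes the Lagrangian route you mention parenthetically: it adjoins a multiplier $\eta$ for the normalisation constraint, writes the integrand, sets its pointwise derivative in $\pi$ to zero via the fundamental lemma of the calculus of variations, and then identifies the normalising constant with $V_{g+1,n}$. Your primary argument via the completion-of-entropy identity $\expect{\pi}[Q] + \kullbacks{\pi}{\rho} = \kullbacks{\pi}{\pi^{\star}} - \log Z$ is a slight but genuine improvement: it establishes \emph{global} optimality in one line from the non-negativity of $\kullback$, whereas the paper's stationarity argument leaves the second-order (convexity) check implicit. Both approaches absorb $\mathcal{U}_{\mathcal{A}}^{1-\alpha}$ into the proportionality in the same way, and your remarks on the backward recursion and on the analytic caveats (finiteness of $Z$, support restriction to $\mathcal{A}$) are accurate and in fact more carefully stated than in the paper.
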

	
	Theorem \ref{lem:det2} points out that, similarly to equation (\ref{eq:solEOC2}), the relationship between the posterior and prior optimal policy belief functions ($\boldsymbol{\pi}_{g+1}$ and $\boldsymbol{\pi}_{g}$, respectively) is governed by a Bayesian type recurrence relation. However, in this case, the recursion still depends on the functions $Q_{g+1,n}$ and $V_{g+1,n}$ as a consequence of the problem's sequential nature. This is in contrast to (\ref{eq:solEOC2}), which only depends on the objective $q$.
	
	Fortunately, the results can be further developed into an explicit expression for the posterior optimal policy belief function that depends solely on prior information contained in $\boldsymbol{\pi}_{g}$. This result is a direct consequence of the \textit{deterministic} system dynamics and is summarised by the following corollary.
	
	\begin{corollary}\label{lem:3} The posterior optimal policy belief function $\pi_{g+1,n}$ can be expressed as
			\begin{equation*}
			\pi_{g+1,n}(a|s) = \pi_{g,n}(a|s) e^{-r_{g,n}} \frac{Z_{g+1,n+1}(f_n(s,a))}{Z_{g+1,n}(s)}
			\end{equation*}
		where 
			\begin{equation*}
			\begin{aligned}
			r_{g,n} &= \lambda r_n + (1-\alpha) \log \pi_{g,n} \\
			Z_{g+1,n} &= \exp(- V_{g+1,n})
			\end{aligned}
			\end{equation*}
					and 
			\begin{equation*}
			\begin{aligned}
			Z_{g+1,n}(s) &= \expect{p_n(\boldsymbol{\pi}_{g,n})} \left[\exp(- R_{g,n})\right] \\
			p_n(\boldsymbol{\pi}_{g+1,n})&\propto p_n(\boldsymbol{\pi}_{g,n}) \exp(- R_{g,n}) \\
			R_{g,n} &= \lambda r_N + \sum\nolimits_{n'=n}^{N-1} r_{g,n'} 
			\end{aligned}
			\end{equation*}
	\end{corollary}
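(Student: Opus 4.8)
The plan is to obtain the Corollary from Theorem~\ref{lem:det2} in two stages: a purely algebraic rewriting that turns the soft Bellman solution into the one-step recurrence for $\pi_{g+1,n}$, and then a telescoping argument along the trajectory that converts this recurrence into the path-integral expressions for $Z_{g+1,n}$ and $p_n(\boldsymbol{\pi}_{g+1,n})$.

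For the first stage I would start from $\pi_{g+1,n}\propto \pi_{g,n}^{\alpha}\exp(-Q_{g+1,n}+V_{g+1,n})$ and split the power as $\pi_{g,n}^{\alpha}=\pi_{g,n}\cdot\pi_{g,n}^{-(1-\alpha)}$ while expanding $Q_{g+1,n}=\lambda r_n+V_{g+1,n+1}$. Merging the $-(1-\alpha)\log\pi_{g,n}$ and $-\lambda r_n$ terms into the single weight $r_{g,n}=\lambda r_n+(1-\alpha)\log\pi_{g,n}$ and writing $Z_{g+1,n+1}=\exp(-V_{g+1,n+1})$, the unnormalised posterior becomes $\pi_{g,n}(a|s)\,e^{-r_{g,n}}\,Z_{g+1,n+1}(f_n(s,a))$, where one uses that inside (\ref{def:EBE}) the function $V_{g+1,n+1}$ is evaluated at the successor state $s_{n+1}=f_n(s,a)$. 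Its normalising constant is then $\int \pi_{g,n}(a|s)\,e^{-r_{g,n}}\,Z_{g+1,n+1}(f_n(s,a))\,\text{d}a$, which by the second line of Theorem~\ref{lem:det2} equals $\exp(-V_{g+1,n})=Z_{g+1,n}(s)$. This yields the first display of the Corollary, and simultaneously identifies $Z_{g+1,n}(s)$ as the required normaliser.

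For the second stage I would first pin down the terminal condition: since the $\lambda$-scaled objective in the preceding Lemma carries terminal cost $\lambda r_N$ and there is no policy at stage $N$, one has $V_{g+1,N}(s)=\lambda r_N(s)$ and hence $Z_{g+1,N}(s)=e^{-\lambda r_N(s)}$. Then I substitute the stage-one recurrence into $p_n(\boldsymbol{\pi}_{g+1,n})=\prod_{n'=n}^{N-1}\delta(s_{n'+1}-f_{n'}(s_{n'},a_{n'}))\,\pi_{g+1,n'}(a_{n'}|s_{n'})$. Each Dirac factor forces $f_{n'}(s_{n'},a_{n'})=s_{n'+1}$, so the ratio $Z_{g+1,n'+1}(f_{n'}(s_{n'},a_{n'}))/Z_{g+1,n'}(s_{n'})$ becomes $Z_{g+1,n'+1}(s_{n'+1})/Z_{g+1,n'}(s_{n'})$, and the product over $n'$ telescopes to $Z_{g+1,N}(s_N)/Z_{g+1,n}(s_n)$. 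Collecting $\prod_{n'}e^{-r_{g,n'}}=e^{-\sum_{n'=n}^{N-1}r_{g,n'}}$ with $Z_{g+1,N}(s_N)=e^{-\lambda r_N(s_N)}$ produces $e^{-R_{g,n}}$, giving $p_n(\boldsymbol{\pi}_{g+1,n})=p_n(\boldsymbol{\pi}_{g,n})\,e^{-R_{g,n}}/Z_{g+1,n}(s_n)$, i.e. the claimed proportionality. Integrating both sides over $\boldsymbol{\tau}_n$ and using that $p_n(\boldsymbol{\pi}_{g+1,n})$ is a normalised density then gives $Z_{g+1,n}(s)=\expect{p_n(\boldsymbol{\pi}_{g,n})}[e^{-R_{g,n}}]$; alternatively this last identity follows by a direct backward induction on $n$ from the stage-one recurrence, peeling off $r_{g,n}$ via $R_{g,n}=r_{g,n}+R_{g,n+1}$.

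The main obstacle is bookkeeping rather than anything deep: the term $(1-\alpha)\log\pi_{g,n}$ makes $r_{g,n}$, and hence $R_{g,n}$, depend on the very policy whose induced path measure $p_n(\boldsymbol{\pi}_{g,n})$ the expectation is taken under, so $r_{g,n}$ must be carried throughout as a genuine function of the state--action pair $\tau_n$ (not a constant), and the factor $\lambda$ on the terminal cost must be kept consistent with the scaling introduced in (\ref{def:EBE}). A secondary care point is that $p_n(\boldsymbol{\pi}_n)$ is singular as a density on the full trajectory space, being supported on the feasible submanifold cut out by the Diracs; every ``$\int\cdots\text{d}a$'' and ``$\expect{p_n}[\cdot]$'' should be read as integration over the control sequence with the state sequence eliminated by the dynamics, and with that reading the telescoping and normalisation steps are routine.
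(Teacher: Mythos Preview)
Your proposal is correct and follows essentially the same approach as the paper: rewrite Theorem~\ref{lem:det2} into a one-step recurrence using $r_{g,n}$ and $Z_{g+1,n}=e^{-V_{g+1,n}}$, then telescope along the deterministic trajectory. The only cosmetic difference is ordering: the paper first unrolls the $Z$-recursion to obtain the path-integral expression for $Z_{g+1,n}$ and afterwards derives the trajectory-density proportionality, whereas you derive the trajectory-density relation first and then recover $Z_{g+1,n}=\expect{p_n(\boldsymbol{\pi}_{g,n})}[e^{-R_{g,n}}]$ by normalisation (and you also note the paper's direct induction as an alternative).
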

	\begin{proof} 
		First, we define the function $Z_{g+1,n}$ as in the theorem. Substituting the transformation into  the solution of Theorem \ref{lem:det2} yields the following recurrence relation for $Z_{g+1,n}$ (this is only because $Z_{g+1,n+1}$ is evaluated \textit{\underline{deterministically}} in $s_{n+1}$).
			\begin{align*}
			Z_{g+1,n} &= \int \pi_{g,n}^{\alpha}e^{-\lambda  r_n}Z_{g+1,n+1} \text{d}a  \\
			&= \expect{\pi_{g,n}} \left[e^{- r_{g,n}} Z_{g+1,n+1}\right]
			\end{align*}
		
		We can then easily develop this recursion into an explicit expression for $Z_{g+1,n}$ as a conditional expectation over the path probability initialised at $n$, that is, $p_n(\boldsymbol{\pi}_{g,n})$. Recall that this probability is conditioned on $s_n$, so $Z_{g+1,n}$ is indeed a function of it, which proves the expression for $Z_{g+1,n}$. 
			\begin{equation*}
			Z_{g+1,n} = \expect{p_n(\boldsymbol{\pi}_{g,n})} \left[\exp(- R_{g,n})\right]
			\end{equation*}
		
		Now we can substitute this result into the recursive expression for $\pi_{g+1,n}$ and recover the main result. 
		
		Finally, one can substitute the explicit expression for $\pi_{g+1,n}$ into  the trajectory probability, $p_n(\boldsymbol{\pi}_{g+1})$. 
			\begin{equation*}
			\tfrac{p_n(\boldsymbol{\pi}_{g+1,n}) }{p_n(\boldsymbol{\pi}_{g,n}) }=  \tfrac{{Z_{g+1,n+1}}}{Z_{g+1,n}}\tfrac{{Z_{g+1,n+2}}}{{Z_{g+1,n+1}}} \cdots  \tfrac{{Z_{g+1,N-1}}}{{Z_{g+1,N-2}}}\tfrac{1}{{Z_{g+1,N-1}}} e^{- R_{g,n} }
			\end{equation*}
			
		It follows that the quotients cancel out when we multiply over the entire trajectory expect for $Z_{g+1,n}$ which in fact normalizes the trajectory density function. 
	\end{proof}
	
	The corollary above implies that the {function}, $Z_{g+1,n}$, and hence the posterior policy belief, $\pi_{g+1,n}$, can be quantified explicitly by evaluating conditional expectations over the prior path probabilities, $p_n(\boldsymbol{\pi}_{g,n})$. Although these results follow quite naturally from Theorem \ref{lem:det2}, they have important consequences in terms of the tractability and computability that are unique to the entropic regularisation of deterministic (rather than stochastic)  optimal control. As noted, the framework collapses when aside from the purposeful uncertainty, stochasticity is introduced on account of the dynamics. { In this case, an expectation over the stochastic dynamics emerges in the definition of $Q_{g,n}$, i.e., $\lambda r_n(s,a) + \expect{p(s'|s,a)}[V_{g,n+1}(s')]$ instead of $\lambda r_n + V_{g,n+1}(f_n(s,a))$. As a result, it is impossible to develop the recursion from Theorem \ref{lem:det2} into explicit expressions because we cannot get rid of the expectation in the exponent.} 
	
	Clearly, these expressions also bear a close correspondence with LSOC. We note that the Feynman-Kac formula establishes a relation between certain partial differential equations and stochastic processes. In particular, it expresses the solution of a partial differential equation as a conditional expectation. The association with stochastic processes emerges, as they constitute a framework where such conditional expectations arise naturally. In our work, the conditional expectation is not associated to any stochastic process, but rather to the Bayesian policy beliefs. Put differently we compute what we might refer to as a \textit{Bayesian path integral} instead of a stochastic path integral. The difference lies in that the conditional expectation is taken over {purposeful} uncertainty introduced to construct a consistent inference procedure about the underlying deterministic optimal control problem; it is not inherent to the problem. 
	
	\subsection{Entropic MPPI}\label{sec:entropic-path-integral-control}
	{ From corollary (\ref{lem:3}), we can derive a sample-based trajectory optimiser akin to MPPI. The algorithm demonstrates similar heuristics to those discussed in sec. \ref{sec:control-as-inference-through-lsoc}, amongst novel attributes. Although the algorithm's structure closely relates to Alg. \ref{alg:1}, its derivation stems from an entirely different theoretical context.} { As a disclaimer, we note that our derivation seeks similarities with Alg. \ref{alg:1} intentionally and therefore uses a restricted class of parametric policy beliefs. However, we note that corollary (\ref{lem:3}) does not in fact exclude the use of any policy, so one could also consider using a Gaussian mixture to address certain problems, e.g., those with many local optima.}	
	
	In particular, we approximate the posterior policy beliefs, $\{\pi_{g+1,n}\}_n$, with a parametric distribution and infer the associated parameters from samples. In this case, we apply the prior policy belief sequence, $\{\pi_{g,n}\}_n$, for a given and fixed initial state, $s^*_0$, and collect data in the form of sample paths or trajectories, $\mathcal{D}_g=\{\vectorstyle{\tau}_j\}$. This means that we will sample from the policies as if they were probability density functions rather than Bayesian belief functions. Averaging over these sample trajectories then allows us to evaluate the expressions in corollary \ref{lem:3}. Given the clear resemblance with Alg. \ref{alg:1}, we can reasonably refer to it as Entropic MPPI (EMPPI); Alg. \ref{alg:3}.
	
	\subsubsection{Policy belief parametrisation} 
	Pursuing general tractability, we are interested in locally linear Gaussian policies with temporal distribution parameters, $\{\theta_{g,n}\}_n$, where $\theta_{g,n} = \{k_{g,n},\matrixstyle{K}_{g,n},\Sigma_{g,n}\}$. With the exception of the  covariance, this is similar to imposing a piecewise linear controller. Since the normal is unimodal, this approximation renders the stochastic search method locally similar to gradient-based approaches.
		\begin{equation*}
		\pi_{g,n}(a|s) \approx \pi_n(a|s;\theta_g) = \mathcal{N}(a|k_{g,n} + \matrixstyle{K}_{g,n} s,\Sigma_{g,n})
		\end{equation*}					
		
	\subsubsection{Projection strategy}
	Since the beliefs are conditioned on $s$, we extend the projection idea with an expectation over the available samples for that time instant, $p(s_n|s_0,\boldsymbol{\pi}_g)$. This expression denotes the probability of state $s_n$ conditioned on the initial state $s_0$ and the prior policy belief sequence $\boldsymbol{\pi}_g$. 
		\begin{align*}
		\theta_{g,n} &= \arg \min_\theta \expect{p{(s_n|s_0,\boldsymbol{\pi}^g)}}\left[\kullbacks{\pi_{g+1,n}}{\pi_\theta}\right] \\
		&= 	\arg\max_\theta \sum\nolimits_j \tfrac{w_{n,j}}{\sum_j w_{n,j}} \log \pi (a_{n,j}|s_{n,j},\theta)
		\end{align*}				
		with 							
			\begin{equation}
		\label{eq:weight}
		- \log w_{n,j} = R_{g,n,j} = \lambda R_{n,j}  + (1-\alpha) \sum\nolimits_{n'=n}^{N-1} \log \pi_{g,n',j}
		\end{equation}		
	Apart from the cost, $R_n$, the weights also contain a second term $\sum_{n'=n}^{N-1} \log \pi_{g,n',j}$. {Given that $\log \pi_{g,n,j} < \log \pi_{g,n,j'}$ if $j$ was less likely than $j'$, this auxiliary cost  encourages unlikely whilst penalizing likely actions.} {Note that if we had not augmented the entropic optimisation problem with a second prior this term would vanish ($\alpha = 1$).} This optimisation problem is solved most efficiently by calculating a likelihood weighted estimate of the joint Gaussian distribution, $\mathcal{N}(\tau_{g+1,n}|\hat{\mu}_{\tau,g+1,n},\hat{\Sigma}_{\tau\tau,g+1,n})$. Parameters $\{k_{g+1,n},\matrixstyle{K}_{g+1,n},\Sigma_{g+1,n}\}$ are then found by conditioning on $s$. Further note that we solve this problem for every $n$
	\begin{equation}
		\label{eq:update}
		\begin{aligned}
		\matrixstyle{K}_{g+1,n} &= \hat{\Sigma}_{as,g+1,n} \hat{\Sigma}_{ss,g+1,n}^{-1} \\
		k_{g+1,n} &= \hat{\mu}_{a,g+1,n} - \matrixstyle{K}_{g+1,n}\hat{\mu}_{s,g+1,n} \\
		\Sigma_{g+1,n} &= \hat{\Sigma}_{aa,g+1,n} - \matrixstyle{K}_{g+1,n} \hat{\Sigma}_{ss,g+1,n} \matrixstyle{K}_{g+1,n}^\top
		\end{aligned}
		\end{equation}					
	where 							
		\begin{align*}
		\hat{\mu}_{\tau,g+1,n} &= \left\langle \tau_{g,n,j} \right\rangle \\
		\hat{\Sigma}_{\tau\tau,g+1,n} &= \left\langle \left(\tau_{g,n,j} - \hat{\mu}_{\tau,g+1,n}\right)\left(\tau_{g,n,j} - \hat{\mu}_{\tau,g+1,n}\right)^\top \right\rangle
		\end{align*}							
	with $\langle (\cdot) \rangle = \sum_j \frac{w_{n,j}}{\sum_j w_{n,j}} (\cdot) $.

	{Similar algorithms are described in references \cite{williams2016agressive,williams2017model,williams2018information,bhardwaj2021fast,rajamaki2016sampled,ghandi2021ieeeral,lefebvre2019path}}. { Amongst these MPPI implementations,} the presence of feedback, the presence of the term $\sum_{n'=n}^{N-1} \log \pi_{g,n',j}$ in the likelihood weights and the applicability to general deterministic optimal control problems are, as far as we are aware, unique to Alg. \ref{alg:3}, {and are now theoretically justified.}
	
	\subsubsection{Numerical implementation}
	{To increase the overall numerical stability, we first use exponential smoothing}
\begin{equation}
		\label{eq:updateex}
		\begin{aligned}
		\theta_{g+1,n} &\leftarrow   \beta \theta_{g+1,n} + (1-\beta) \theta_{g,n}
		\end{aligned}
		\end{equation}						
	Second the use of Monte Carlo estimates implies that $M\gg 1$. {To estimate the covariance matrix of the trajectory, $\Sigma_{\tau\tau,N-1}$, we need at least a multiple of $(N-1)(\frac{1}{2}(n_\tau^2+\tfrac{1}{2}n_\tau)$ samples.} It is clear that the updates are prone to high variance for finite $M$. To remedy this issue we project the time signals $\{\hat{\mu}_{\tau,g+1,n},\hat{\Sigma}_{\tau\tau,g+1,n}\}$ on a polynomial space of order $d$ spanned by the basis $\{1,n,n^2,\dots,n^d\}$. Despite this measure the update for $\matrixstyle{K}_{g+1,n}$ remained too unstable. Hence instead we substitute a fixed gain matrix $\matrixstyle{K}$ for $\matrixstyle{K}_{g+1,n}$ in the updates in (\ref{eq:update}). Finally, since $\Sigma_{g+1,n}\in\mathbb{S}_+^n$, an auxiliary procedure is used that changes $\gamma$ until $\Sigma_{g+1,n}\leftarrow \Sigma_{g+1,n} + \gamma \matrixstyle{I} \in\mathbb{S}_+^n$.  
	
	\subsubsection{Relation to other algorithms} 
	Standard gradient-based trajectory optimisation algorithms iterate between a forward and a backward pass to probe the local problem geometry about the iterate trajectory and to optimise it, respectively. Similarly, Alg. \ref{alg:1} and \ref{alg:3} iterate between a forward Monte Carlo step and an inference step. As opposed to the backward step in gradient-based algorithms, the inference step does not possess a causal structure and is thus  amenable to parallelisation . In Alg. \ref{alg:1} and \ref{alg:3}, the weights are time dependent, so in some way, the recursive nature of the problem emerges. {This is opposed to Alg. \ref{alg:2} and other stochastic search algorithms such as CMA-ES\cite{abdolmaleki2017deriving}, which could be used to solve (\ref{eq:dtSOC}) as well. The use of conditional policies is not straightforward, nor are the weights time dependent. So taking into account the blue print architecture of Alg. \ref{alg:2}, EMPPI can be understood as a temporal rollout of an Evolutionary Strategy or a stochastic implementation of gradient-based trajectory optimisation algorithms where gradient information is inferred from the sampled trajectories.} Finally, we emphasise again that Alg. \ref{alg:1} solves specific stochastic optimal control problem (\ref{eq=stoc}) whilst Alg. \ref{alg:3} solves general deterministic optimal control problem (\ref{eq:dtSOC}). 
	
	\begin{algorithm}[t!]
		\algsetup{linenosize=\small}
		\small
		\caption{Entropic MPPI}
		\begin{algorithmic}[1]
			\label{algo:PI2-DDP}
			\STATE \textbf{input} $\lambda$, $\alpha$, $\{k_{0,n},\matrixstyle{K}_{0,n},\Sigma_{0,n}\}_n$
			\STATE \textbf{output }$\{{k}_{g,n},\matrixstyle{K}_{g,n},\Sigma_{g,n}\}_n$
			\FOR {$g=0,1,2,\dots$}
			\FOR {$j=1,2,3,\dots,M$}
			\FOR {$n=0,1,2,\dots,N-1$}
			\STATE  ${a}_{n,j}\sim \mathcal{N}\left({k}_{g,n} + \matrixstyle{K}_{g,n} s_{n,j},{\Sigma}_{g,n}\right)$ 
			\STATE  ${s}_{n+1,j} \leftarrow {f}_n\left({s}_{n,j},{a}_{n,j}\right)$
			\STATE  $r_{g,n,j} \leftarrow \lambda r_{n}(s_{n,j},a_{n,j}) + (1-\alpha) \log \pi_{g,n}(a_{n,j}|s_{n,j}) $
			\ENDFOR
			\STATE  $r_{g,N,j} \leftarrow \lambda r_N(s_{N,j})$
			\ENDFOR
			\FOR {$n=0,1,2,\dots,N-1$}
			\STATE $\{k_{g+1,n},\matrixstyle{K}_{g+1,n},\Sigma_{g+1,n}\} \leftarrow$ (\ref{eq:weight}),(\ref{eq:update}),(\ref{eq:updateex}) 
			\ENDFOR
			\ENDFOR
		\end{algorithmic}
		\label{alg:3}
	\end{algorithm}
	
	\subsection{Numerical example}
	We present results for numerical experiments with a 4 dimensional planar robot arm operating in an environment with a single obstacle. { This is not only to demonstrate the practical implications of our theoretical investigation (corollary 1 specifically), but also to investigate the effect of the different MPPI algorithms and changing their parameter settings (Alg. 3) on the exploration versus exploitation behaviour of the resulting distributions.}
	
	\subsubsection{Environment} 
	The environment consists of a planar pendulum with $4$ links of length $1$. Masses $(m=1)$ are concentrated at the end of each link. We use torque inputs directly instead of generating kinematic trajectories and relying on low-level controllers. An OCP is formulated with horizon $N=25$. We use a relatively coarse time discretisation $\Delta t = 0.1$. The cost rate function is defined as $r_n= 0.1 \|a_n\|^2\Delta t +  10  \|\dot{q}\|^2\Delta t + \|q_{2:4}\|^2\Delta t$, penalising the energy consumption and aggressive moves. We intentionally do not encode information about the non-linearity of the dynamics nor the obstacles. The system is drawn towards a final end-effector configuration using the final cost term $r_N = 300 \log(\|d\|+0.1) + 10 \|d\|^2$, where $d$ represents the distance vector between the end-effector and goal configuration. We do not represent the obstacles in the cost function. Interactions with the obstacles are strictly through the dynamics; gradient-based algorithms are not in favour here. The contact dynamics are modelled through forces $F_c = \frac{d_c}{\|d_c\|^3} H(-d_c^\top \matrixstyle{J}_c\dot{q})$, where $H$ is the Heaviside function, $d_c$ is the distance vector between the obstacle and the near contact point and $\matrixstyle{J}_c$ is the Jacobian matrix computed at the nearest contact point. Described contact dynamics are inelastic. 
	
	\begin{figure}[t!]
		\centering
		\includegraphics[width=.865\columnwidth]{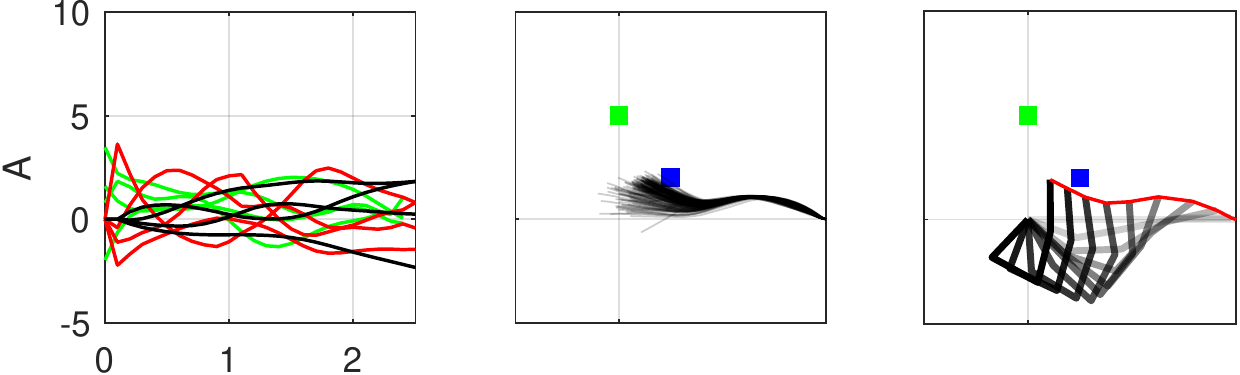}
		\includegraphics[width=.865\columnwidth]{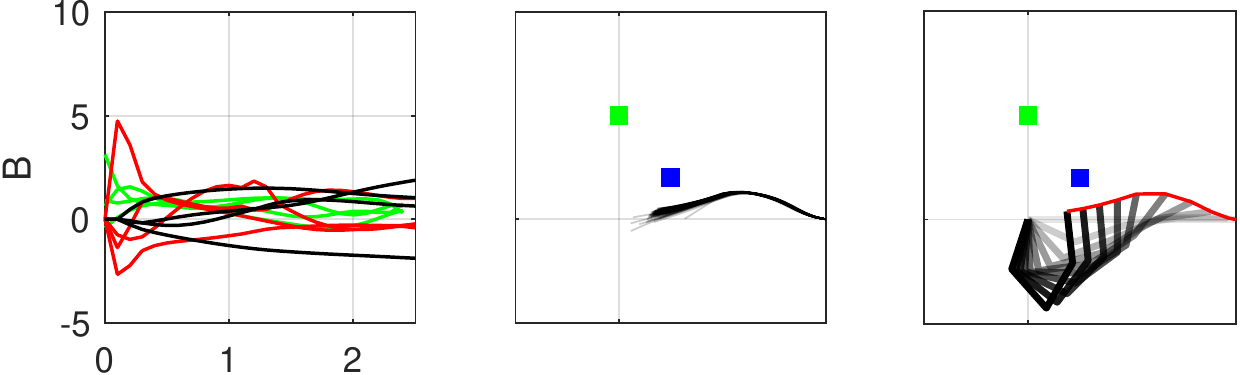}
		\hspace*{3.5pt}\includegraphics[width=.865\columnwidth]{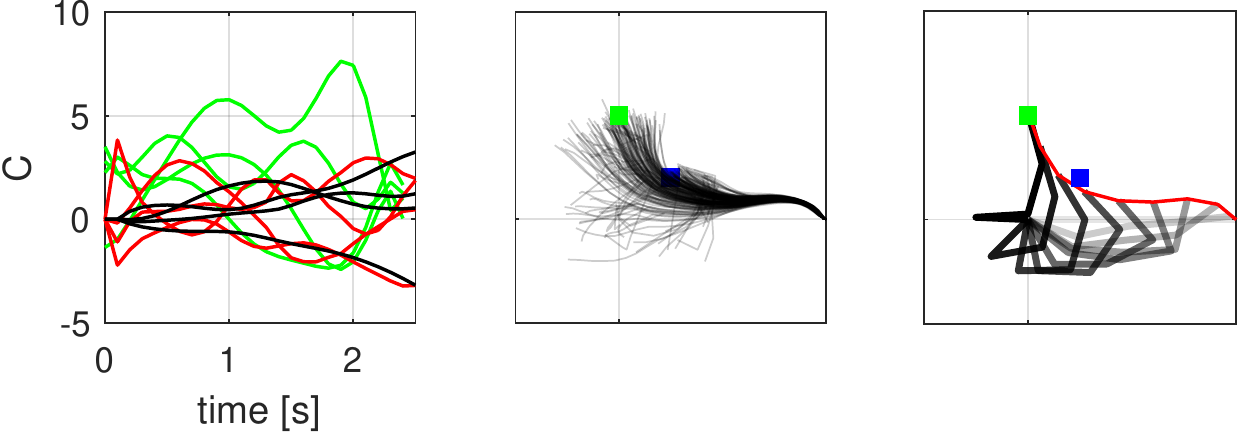}
		\vspace*{-3pt}
		\caption{\textit{From top to bottom}: Illustration of solutions obtained {with algorithms A, B and C}. \textit{From left to right}: Visualizations of the final policy after 200 generations. \textit{Left}: Deterministic time signals: joint angles (black), joint velocities (red) and joint torques (green). \textit{Middle}: Top view of 200 end-effector histories. \textit{Right}: Snap shots of deterministic system trajectory with end-effector trajectory in red. Goal configurations are shown in green, obstacles in blue.}
		\label{fig:EPIC}
	\end{figure}
	
	\subsubsection{Experiments} 
	We compare three versions of Alg. \ref{alg:3}. In version A, we set $\alpha = 1$, only update $k_{g,n}$, set $\matrixstyle{K}_{g,n} = 0$ and $\{\Sigma_{g,n}\} = 10^{-1} \matrixstyle{I}$. Version A is the most closely related to the MPPI implementation in \cite{williams2018information} { and therefore serves as a baseline}. An important improvement of MPPI comes from updating the covariance (deviating from the theory of LSOC) \cite{stulp2012path,bhardwaj2021fast}. As a result, it can be observed empirically that the covariance collapses prematurely. Therefore, in version B, we set $\alpha = 0$ but update $\Sigma_{g,n}$. According to (\ref{eq:sequence}), the policy belief functions will converge to a Dirac delta and it is anticipated that the search will converge prematurely. Version C implements the full algorithm. Versions A, B and C are initialised with the feedforward $\{k_{0,n}\} = 5\cdot10^{-1} $ and with covariances $\{\Sigma_{0,n}\} = 10^{-1} \matrixstyle{I}$. Unless specified, we set $(\lambda,\alpha,\beta,M) = (.2,.95,.1,200)$ and then run 200 generations.
	
	\subsubsection{Results} 
	The solution after $200$ generations is visualised in Fig. \ref{fig:EPIC}. Clearly, only C completes  the tasks successfully. As anticipated, we can observe the presence of premature distribution collapse for version B. The entropy of the distribution \textit{evaporates} and the search stalls. The performance of A is superior to that of B yet also it fails to execute the final reach of the complete manoeuvre {within 200 generations}. The main benefit of C over A is that it can automatically adapt the covariance of the policy. This allows the policy to discover interesting directions more rapidly. One can also observe that some of the alternative histories still collide with the obstacle; howevers the bulk effectively reaches the goal. These observations are confirmed by Fig. \ref{fig:EPIC2} and \ref{fig:EPIC3}. In particular, for version C, Fig. \ref{fig:EPIC3} clearly illustrates how the covariance of the policy self-adapts to the progress made on the problem.
	
	\begin{figure}[t!]
		\centering
		\includegraphics[width=.32\columnwidth]{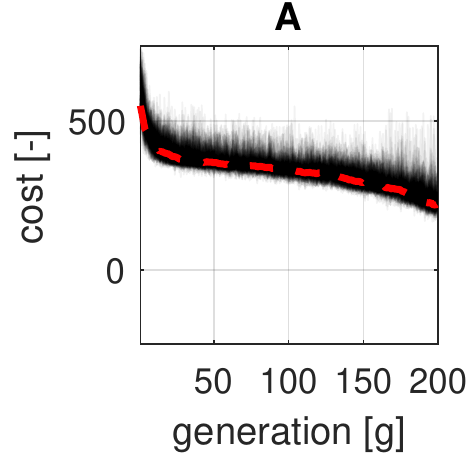}
		\includegraphics[width=.32\columnwidth]{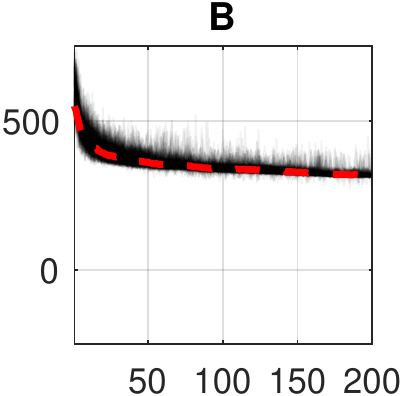}
		\includegraphics[width=.32\columnwidth]{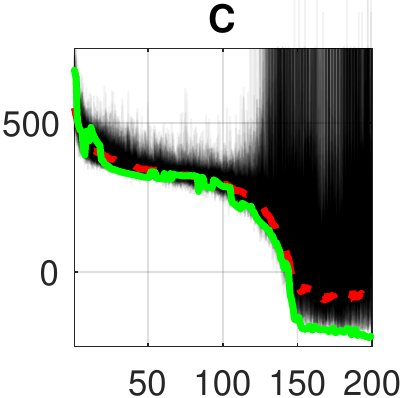}
		\caption{\textit{From left to right}: Convergence of sampled trajectory cost $R_0(\boldsymbol{\tau}_{g,j})$ (black) and soft mean $\hat{R}_{g,0} = -\frac{1}{\lambda} \log \frac{1}{M} \sum_j \exp(-\lambda R_0(\boldsymbol{\tau}_{g,j}))$ (dotted red) obtained with algorithms A, B or C. Premature convergence collapse occurs after 50 iterations with algorithm B. The entropy of the policies with algorithm C rises after about 75 iterations allowing to escape the local minimum after about 100 iterations. Due to the higher variance of the policies generated with algorithm $C$ some of the trajectories obtain a very high cost. However their contribution is filtered out by the exponential weighting as illustrated by the deterministic cost convergence depicted in green.}
		\label{fig:EPIC2}
		\centering
		\includegraphics[width=1\columnwidth]{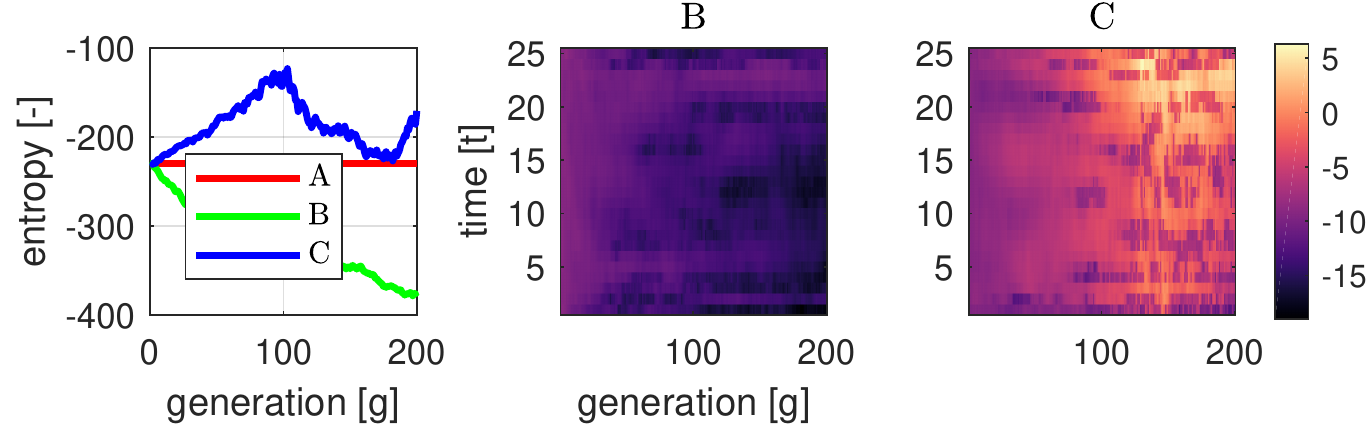}
		\vspace*{-12pt}
		\caption{\textit{Left}: Evolution of entropy content of the policy sequence $\{\vectorstyle{\pi}_g\}_g$ computed as $\sum_n \log |\Sigma_{g,n}|$. \textit{Right}: Distribution of entropy context of policies $\{\pi_{g,n}\}$  computed as $\log |\Sigma_{g,n}|$ as a function of time and iteration.}
		\label{fig:EPIC3}
	\end{figure}
	
	\section{Conclusion}
	Sample- based trajectory optimisation is a promising tool for robotics with complex and non-smooth dynamics and cost functions, both to synthesise complex behaviour and to compute real-time feedback controls. In this contribution, we have proposed an alternative derivation of the popular MPPI algorithm. Our derivation is founded on the framework of EDOC, an entropy-regularised version of the standard deterministic optimal control problem, for which we have shown that the optimal policy can be given by a policy belief function and expressed as a Bayesian path integral similarly to LSOC. We argue that our derivation allows for more principled future algorithmic development of sample-based trajectory optimisers and will become incorporated  into other challenges and applications.
	
	\section*{Acknowledgements} This research received funding from the ``Flemisch Artificial Intelligence Research (FlAIR)'' programme, Belgium.
	
	\newpage
	\appendices
	\section{Stochastic search algorithms}\label{app:stochastic-search-algorithms}
	We return to  the derivation in section \ref{sec:stochastic-search-methods}. Now, we wish to manipulate it into an expectation over the prior $\pi_g$, which would allow us to estimate it using Monte Carlo sampling. To arrive at the second line, we substitute the definition for the relative entropy. Since we optimise for $\theta$, we can further neglect the first term. Then, to arrive at the third line, we make use of the recurrence relation. This expectation is then approximated using a sample $\mathcal{D}_g = \{x_{j}\}_j$ where $x_j\sim\mathcal{\pi}_{\theta_g}$.
		\begin{align*}
		\theta_{g+1} &= \arg\min_{\theta\in\Theta}\kullbacks{\pi_{g+1}}{\pi_\theta} \\ 
		&= \arg \max_{\theta\in\Theta} \expect{\pi_g} [\pi_g^{1-\alpha}  e^{-\lambda q} \log \pi_\theta ]  \\
		&\approx \arg \max_{\theta\in\Theta} \hat{\mathbb{E}}_{\mathcal{D}_g} \left[e^{-\left(\lambda q + (1-\alpha)\log \pi_{\theta_g}\right)} \log \pi_\theta  \right] 
		\end{align*}								
	
	When we approximate beliefs $\pi_g$ using the Normal distribution $\mathcal{N}(x|\mu,\Sigma)$; this approach generates the updates shown in algorithm \ref{alg:2}. We refer to \cite{lefebvre2020elsoc} for further details.
	
	\section{Proof of Theorem \ref{lem:det2}}\label{sec:proof-of-theorem-reflemdet2}
	\begin{proof}
		Consider $\mathcal{L}=\expect{\pi}[l]$, where $\eta$ and $\lambda$ are multipliers associated to the normalisation and inequality constraints.
			\begin{equation*}
			\begin{aligned}
			\mathcal{L} &= \expect{\pi}\left[\lambda Q_{g+1,n} + \alpha\log\tfrac{\pi}{\pi_{g,n}} + (1-\alpha)\log\tfrac{\pi}{\mathcal{U}_\mathcal{A}} + \eta \right] - \eta 
			\end{aligned}
			\end{equation*}
		
		The partial derivative of the integrand is given by 
			\begin{equation*}
			\nabla_\pi l = \lambda Q_{g+1,n} + \alpha \log\tfrac{\pi}{\pi_{g,n}} + (1-\alpha) \log \tfrac{\pi}{\mathcal{U}_\mathcal{A}}+ \eta
			\end{equation*}							
		
		According to the fundamental lemma of the calculus of variations, the optimal policy belief function is then given by the root of the partial derivative of the integrand. 
	\begin{align*}
			\pi_{g+1,n} &\propto \pi_{g,n}^{\alpha}\cdot  \mathcal{U}_\mathcal{A}^{1-\alpha} \cdot \exp\left(-\lambda Q_{g+1,n+1}\right) \\
			&\propto\pi_{g,n} \exp\left(-\left(\lambda Q_{g+1,n} + (1-\alpha) \log \pi_{g,n}\right)\right) 
			\end{align*}							
		
		The value of $\eta$ assures that $\pi_{g+1,n}$ is normalised, its value is straightforward and it can be absorbed in the proportionality. That of $\lambda$ can be found by substituting $\pi_{g+1,n}$ into  $\mathcal{L}_\pi$ and solving the dual problem. If $\lambda > 0 $, a solution is feasible; therefore, if we choose $\lambda > 0$, then some $\Delta(\pi_g,\alpha,\lambda) > 0$ exists. Further note that $\pi_{g+1,n}$ only takes on positive values in the set $\mathcal{A}$, on account of the uniform distribution $\mathcal{U}_\mathcal{A}$, which, for notational convenience, can also be absorbed into the proportionality. Finally, it can be verified that the corresponding normalisation constant is equal to the definition of $V_{g+1,n}$.
	\end{proof}
	
	\bibliographystyle{unsrt}
	\bibliography{references}
	
\end{document}